\documentclass[journal]{IEEEtran}

\usepackage{setspace}

\usepackage{amsmath,amssymb,bm}

\usepackage{amsfonts,dsfont,color,bbm}
\usepackage[ruled,vlined]{algorithm2e}

\usepackage{booktabs}
\usepackage{multirow}
\usepackage{cite}

\usepackage{amsthm}
\usepackage{xpatch}
\makeatletter
\xpatchcmd{\proof}{\@addpunct{.}}{\@addpunct{:}}{}{}
\makeatother

\usepackage[bookmarks=false]{hyperref}
\usepackage{graphicx}
\usepackage{balance}

\renewcommand{\S}{\mathcal{S}}
\renewcommand{\L}{\mathcal{L}}
\newcommand{\muu}{\vec{\mu}}
\newcommand{\R}{\mathcal{R}}
\newcommand{\F}{\mathcal{F}}
\newcommand{\D}{\mathcal{D}}
\newcommand{\Q}{\mathcal{Q}}
\newcommand{\B}{\mathcal{B}}
\newcommand{\W}{\vec{W}}
\renewcommand{\b}{\vec{b}}
\renewcommand{\vec}[1]{\mbox{\boldmath${#1}$}}
\newcommand{\norm}[1]{\left|\left|#1\right|\right|}

\renewcommand{\and}{ \,\mathrm{and}\, }

\newcommand{\real}{\mathbb{R}}

\newcommand{\I}{\mathcal{I}}
\newcommand{\V}{\mathcal{V}}
\renewcommand{\P}{\mathcal{P}}
\renewcommand{\R}{\mathcal{R}}

\newcommand{\M}{\mathcal{M}}
\newcommand{\card}[1]{\vert{#1}\vert}

\newcommand{\Lsupervised}{\L_\mathrm{supervised}}
\newcommand{\Lsimilarity}{\L_\mathrm{instance}}
\newcommand{\Lcenter}{\L_\mathrm{center}}

\newcommand{\x}{\vec{x}}
\newcommand{\MC}{\card{\M}}

\newcommand{\fbest}{f_\mathrm{best}}
\newcommand{\Sdis}{\mathcal{S}_{\mathrm{dissimilarity}}}
\newcommand{\Scenter}{\mathcal{S}_{\mathrm{center}}}

\DeclareMathOperator*{\E}{\mathbb{E}}

\theoremstyle{plain}

\newtheorem{lemma}{Lemma}

\newtheoremstyle{customremark}
{}                
{}                
{}        
{}                
{\itshape\bfseries}       
{.}               
{ }               
{}                

\theoremstyle{customremark}
\newtheorem{remark}{Remark}
\theoremstyle{definition}
\newtheorem{definition}{Definition}

\newcommand{\be}{\begin{equation}}
\newcommand{\ee}{\end{equation}}
\newcommand{\bea}{\begin{eqnarray}}
\newcommand{\eea}{\end{eqnarray}}

\newcommand{\ei}{\end{itemize}}
\newcommand{\bi}{\begin{itemize}}

\hyphenation{op-tical net-works semi-conduc-tor}

\begin{document}
%
\title{Unsupervised Anomaly Detection via \\Deep Metric Learning with End-to-End Optimization}

\author{Selim~F.~Yilmaz~and~Suleyman~S.~Kozat,~\IEEEmembership{Senior Member,~IEEE}
\thanks{This work is supported by the Turkish Academy of Sciences Outstanding Researcher Programme and also supported in part by Tubitak Contract No: 117E153.}
\thanks{S.~F.~Yilmaz~and~S.~S.~Kozat are with the Department of Electrical and Electronics
Engineering, Bilkent University, 06800 Ankara, Turkey (e-mail:
\{syilmaz,kozat\}@ee.bilkent.edu.tr).
}
\thanks{
S.~S.~Kozat is also with  the DataBoss A.S., Bilkent Cyberpark, Ankara 06800, (email: serdar.kozat@data-boss.com.tr).
}
}


\maketitle

\begin{abstract}
We investigate unsupervised anomaly detection for high-dimensional data and introduce a deep metric learning (DML) based framework. In particular, we learn a distance metric through a deep neural network. Through this metric, we project the data into the metric space that better separates the anomalies from the normal data and reduces the effect of the curse of dimensionality for high-dimensional data. We present a novel data distillation method through self-supervision to remedy the conventional practice of assuming all data as normal. We also employ the hard mining technique from the DML literature. We show these components improve the performance of our model and significantly reduce the running time. Through an extensive set of experiments on the 14 real-world datasets, our method demonstrates significant performance gains compared to the state-of-the-art unsupervised anomaly detection methods, e.g., an absolute improvement between 4.44\% and 11.74\% on the average over the 14 datasets. Furthermore, we share the source code of our method on Github to facilitate further research.
\end{abstract}
\begin{IEEEkeywords}
Anomaly detection, unsupervised, one-class, deep metric learning, data distillation.
\end{IEEEkeywords}

\IEEEpeerreviewmaketitle

\section{Introduction}
\subsection{Preliminaries}
\IEEEPARstart{W}{e} study anomaly detection~\cite{chandola2009anomaly}, which has been extensively studied over recent years due to its wide range of applications such as video surveillance~\cite{wu2019deep}, price manipulation detection~\cite{cao2014adaptive}, network intrusion detection~\cite{ahmed2016survey} and data imputation~\cite{ozkan2015imputation}. Anomaly detection aims to find the patterns that do not conform to the expected behavior~\cite{chandola2009anomaly}. Anomaly detection is an essential problem due to its usefulness in providing critical and actionable information on various domains~\cite{chandola2009anomaly}. Particularly, we study the anomaly detection problem in the unsupervised setting, where no labels exist. In the anomaly detection problem, the "normal" data is generally assumed to conform to a model, which is often unknown. Then, the anomalies are defined as the instances that significantly deviate from the underlying model of the normal data, where "significant deviances" are hard to model in real life and usually application dependent~\cite{aggarwal2015outlier}. In this sense, we introduce an algorithm to remedy such application dependent components and introduce a generic and widely applicable algorithm in different domains.


Accurately labeling anomalies under human supervision is costly and even impractical in many cases such as manually labeling anomalies in large social networks~\cite{aggarwal2015outlier}. Moreover, anomalies generally are rare among the normal behavior of the data, thus, filtering the normal data out requires a vast amount of human effort~\cite{aggarwal2015outlier}. Furthermore, new types of anomalies may arise, e.g., zero-day attacks~\cite{ahmed2016survey}, which do not exist in the set of the labeled anomalies~\cite{chandola2009anomaly}. Accordingly, an unsupervised anomaly detection method is more broadly applicable compared to the semi-supervised and the supervised ones, however, more prone to increased false-alarm rates, which severally limits their direct application in real life problems~\cite{chandola2009anomaly}.

Although significant development has been made in recent years~\cite{aggarwal2015outlier}, anomaly detection in unsupervised setting remains challenging, especially on higher dimensions. Moreover, the running time naturally increases as the number of the dimensions increase, which should be avoided since anomalous behavior should be determined in real time for quick action. To reduce the effect of the curse of dimensionality, various two-step approaches have been proposed~\cite{aggarwal2015outlier}. In these methods, first, the data is converted into low dimensional latent vectors via random subspace selection, dimensionality reduction, or random projection~\cite{aggarwal2015outlier}. Then, in the second step, a model is fitted to represent the normality in the lower-dimensional data, which is later used to score the anomalies. However, the initial dimensionality reduction step is independent of the latter model fitting step, which produces suboptimal performance since the essential information to detect the anomalies could be removed in the first stage.

To remedy those issues, we introduce a novel "end-to-end" anomaly detection method based on deep metric learning (DML) for the unsupervised setting. In particular, we project the data through a network into the metric space, in which we optimize the similarity-based loss between distilled training instances, where both components are optimized jointly, unlike the previous approaches \cite{aggarwal2015outlier}. We derive an anomaly scoring function with $\Theta(1)$\footnote{$\Theta(w(n))$ denotes the set of all $r(n)$, where $a_1w(n) \leq r(n) \leq a_2w(n)$, $\forall n > n_0$ for $n\in \mathbb{Z}^+$ such that there exist positive integers $a_1$, $a_2$, and $n_0$.} time and memory complexity. We introduce a self-supervised data distillation method, which naturally fits our derived scoring function. Specifically, throughout the epochs, we select the instances that are labeled as likely to be normal by our method and optimize the model only using these instances. We also introduce hard normal mining method for our unsupervised framework~\cite{harwood2017smart}. Through our hard normal mining method, we only learn from the instances that are the farthest to the metric-space center of our distilled data, i.e., the most challenging instances, which further improves the model by avoiding overfitting to the easiest instances. We precisely define our methodology so that further instance mining methods and loss functions can be readily adapted from the DML literature using our method. Moreover, our method can be readily adapted to other data structures, e.g., convolutional neural networks (CNNs)~\cite{krizhevsky2012imagenet} can be used for anomaly detection on images, and recurrent neural networks (RNNs)~\cite{elman1990finding} can be used for anomaly detection on time series as we provide such adaptions as remarks in the paper.
\subsection{Prior Art and Comparisons}
Deep neural networks have been extensively studied in different areas, including anomaly detection due to their ability to model complex patterns~\cite{lecun2015deep}.  To benefit from this capability, metric learning is extended to the deep learning framework in many forms such as the Siamese network~\cite{bertinetto2016fully}. The Contrastive Loss~\cite{hadsell2006dimensionality} is also proposed, which pulls the instances of the same class and push the instances of different classes in the metric space. This loss function later extended in several studies and achieved outstanding results in variety of tasks such as biometric recognition~\cite{biometrictnnls}, image classification~\cite{imageclassificationtnnls}, and image verification~\cite{faraki2017large}. Promoting hard pairs in training, i.e., hard mining, improves the discrimination capability of the DML model in the supervised setting~\cite{harwood2017smart}. Accordingly, we introduce hard normal mining method into our model for the unsupervised anomaly detection problem. Although the DML based methods received great attention in such supervised tasks, they have not received enough attention in the unsupervised anomaly detection literature. We are the first in the literature to introduce an unsupervised anomaly detection method based on the DML, which is optimized end-to-end.

In the classical anomaly detection literature, various methods have also been proposed that incorporate support vector machines (SVM)~\cite{scholkopf2001estimating}, density estimation~\cite{levelset}, information theory~\cite{livi2015entropic}, clustering~\cite{moshtaghi2011clustering}, and nearest-neighbor graphs~\cite{ramaswamy2000efficient}. Some hybrid approaches~\cite{erfani2016high} employ deep networks such as CNN to extract features and feed these features into classical anomaly detectors. Deep learning-based unsupervised anomaly detection methods almost exclusively aim to minimize the reconstruction error of autoencoder~\cite{aggarwal2015outlier,deeooc}, or the One-Class SVM (OC-SVM) formulation~\cite{Ergen_2019,wu2019deep}. Autoencoder (AE) based approaches require the encoder and the decoder networks. The encoder network projects the input into latent space. The decoder network tries to reconstruct the original input from the latent space. The decoder and the encoder structure is often symmetric. Our model requires only the encoder network and directly optimizes it end-to-end. Hence, our model requires nearly half of the parameters of the AE to project into the latent (metric) space, which is advantageous since the training accelerates as the number of parameters that need to be learned reduces~\cite{aggarwal2015outlier}. Moreover, we distill the data from possible outliers using our model, whereas AE treats outliers as normal instances, which causes overfitting since neural networks are sensitive to the outliers~\cite{aggarwal2015outlier}.

In this work, we use the DML in unsupervised anomaly detection domain. The common practice in the unsupervised anomaly detection is to assume all the training data as normal~\cite{harwood2017smart}. However, this assumption does not always hold in practice since the outliers occur within the typical process in many systems. Thus, such methods suffer in performance when the training data is polluted with the anomalies~\cite{chandola2009anomaly} (see Section~\ref{sec:unsupervised_experiments}). To reduce the effect of the polluted data, we introduce a data distillation method, which cleans the training data from outliers before each epoch through self-supervision. We also apply hard normal mining to prevent overfitting, which is a common issue in neural network based anomaly detection methods~\cite{aggarwal2015outlier}.
\subsection{Contributions}
Our contributions are as follows.
\begin{enumerate}
    \item For the first time in the literature, we introduce a DML based unsupervised anomaly detection method, where the method is optimized "end-to-end" in a fully unsupervised setting. 
    \item We derive a center distance based anomaly scoring function with $\Theta(1)$ time and memory complexity compared to the dissimilarity based scoring function of $\Theta(N)$ time and memory complexity where $N$ is the number of instances in the training set. 
    \item We introduce a novel self-supervision based data distillation method, which naturally fits into our prediction function with constant time complexity. This data distillation methodology is generic so that one can quickly adapt to any epoch-wise method such as deep autoencoders~\cite{aggarwal2015outlier}.
    \item We also present an online hard normal mining algorithm that selects the most challenging instances in the distilled training data, which prevents the overfitting to the easiest instances. 
    
    \item Through an extensive set of experiments on 14 distinct real-world datasets, we demonstrate that our method significantly outperforms the state-of-the-art methods for unsupervised and one-class training settings. Our method achieves an absolute improvement between 4.44\% and 11.74\% on the average over 14 datasets that are widely used in the anomaly detection literature.

    \item We perform an ablation study to demonstrate the performance gains achieved by the components of our method. 
    
    \item For reproducibility and to facilitate further research, the source code of our method is available on \href{http://github.com/selimfirat/addml/}{http://github.com/selimfirat/addml/}.
    

\end{enumerate}
\subsection{Organization of the Article}
The organization of the rest of this article is as follows. In Section~\ref{sec:problem}, we first describe the unsupervised anomaly detection problem and define our objective. In Section~\ref{sec:novel_anomaly}, we develop our method. In Section~\ref{sec:experiments}, we describe the performed experiments. Finally in Section~\ref{sec:conclusion}, we conclude by providing remarks.
\section{Problem Description}
\label{sec:problem}
\begin{figure*}[t]

    \centering

    \includegraphics[width=0.8\textwidth]{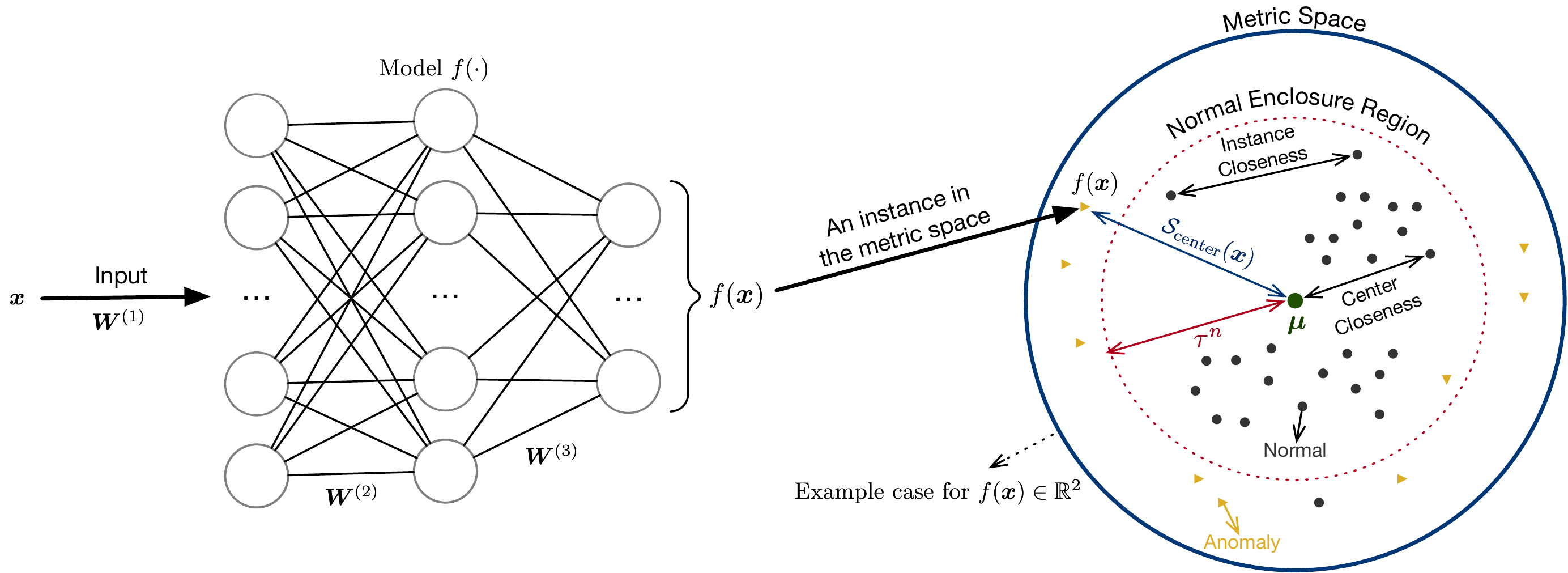}

    \caption{The figure illustrates the introduced method and the terms used. All the model parameters are jointly optimized. As the model $f(\cdot)$, one can use different approaches described in the text as remarks. The figure is best viewed in color.}

    \label{fig:main_figure}

\end{figure*}
In this article, all vectors are column vectors and denoted by boldfaced lowercase letters. All matrices are denoted by boldfaced uppercase letters. We denote the multisets via $\{\cdot\}$. A multiset is a set that can have more than one identical members.

The learner $f$ observes training data $\mathcal{M}=\{\x_i \in \real^d\}_{i=1}^n$, where $n$ is the number of training data. The corresponding ground truth labels $\{d_i \in \{0,1\}\}^n_{i=1}$ are not available during the training in the unsupervised training and only positive labeled data are available in one class training. The label $d_t=0$ represents a normal instance whereas the label $d_t=1$ represents an anomalous instance. Naturally, we consider anomalous instances as not normal instances. We define the scoring function $\S$ for a test datum $\x_t \in \real^d$ with true label $d_t \in \{0,1\}$ as
\begin{align*}
\S(\x_t) &= p(d_t=1 \mid \x_t).
\label{eq:score_def}
\end{align*}
The $\S$ assigns a score representing the degree of outlierness of each instance. Through the assigned scores, we obtain the list of instances ranked by the degree of outlierness. In the ideal scenario, all anomalous instances in the list should have higher score than the normal instances. Through this ranked list, an analyst can choose a domain-specific cutoff threshold $\tau$ to output the most appropriate anomalies~\cite{chandola2009anomaly}. We construct a binary decision function $\hat{d}_t \in \{0,1\}$ via thresholding the scoring function $\S$ by $\tau \in \real$ to make a decision, i.e.,
\begin{equation}
\hat{d}_t =
\begin{cases} 
1  \text{ (Anomaly)}, & \text{ if } \S(\x_t) > \tau\\    0 \text{  (Normal)}, & \text{ if } \S(\x_t) \leq \tau.
\end{cases}
\label{eq:dhat}
\end{equation}
Now, given the learner $f$, the scoring function $\S$, the threshold $\tau$, and the training data $\M$, we decide whether the test instance $\x_t \in \real^d$ is anomalous or not via $\hat{d}_t$. In the following section, we introduce our method by defining the $f$ and the $\S$.
\section{Novel Anomaly Detection Algorithm}
\label{sec:novel_anomaly}

Here, we first describe the supervised deep metric learning framework. We then provide how it will be used in the semi-supervised and the unsupervised setting via novel losses and data distillation method. Lastly, we provide the details of our training method.

\subsection{Conventional Metric Learning}

\label{sec:mahalanobis_metric}

In this section, we describe the traditional Mahalanobis metric learning. Traditional distance metric learning-based methods learn a symmetric positive semidefinite matrix $\vec{M} \in \mathbb{R}^{d \times d}$ to compute the Mahalanobis distance $\D_{\small{\vec{M}}}$ as~\cite{kulis2013metric}:
\begin{align*}
    \D_{\small{\vec{M}}} (\x_i, \x_j)
    =
    \sqrt{(\x_i - \x_j)^T\vec{M}(\x_i - \x_J)},
\end{align*}
where $\x_i \in \mathbb{R}^d$ and $\x_j \in \mathbb{R}^d$.
We can decompose $\vec{M} = \vec{L}^T \vec{L}$ for $\vec{L} \in \mathbb{R}^{q \times d}$ and $q \leq d$ since $\vec{M}$ is a positive semidefinite matrix, we have:
\begin{align*}
    \D_{\small{\vec{M}}} (\x_i, \x_j) &= \sqrt{(\x_i - \x_j)^T\vec{L}^T\vec{L}(\x_i - \x_j)} \\
    &=  \sqrt{(\vec{L}\x_i - \vec{L}\x_j)^T(\vec{L}\x_i - \vec{L}\x_j)} \\
    &= \norm{\vec{L}\x_i - \vec{L}\x_j}_2.
\end{align*}
Instead of learning the $\vec{M}$ with a positive semidefiniteness constraint, we can directly optimize the unconstrained $g(\x)=\vec{L}\x$. The function $g$ is a linear transformation, which projects $\x$ into a (possibly) lower dimensional subspace. Using the linear transformation $g(\x) = \vec{L}\x$, we obtain
\begin{align*}
    \D_{\small{\vec{M}}} (\x_i, \x_j) &= \norm{g(\x_i) - g(\x_j)}_2.
\end{align*}
In the following section, we introduce a deep learning based extension for the metric learning.
\subsection{Deep Metric Learning (DML)}
\label{sec:deep_metric}
Here, we show the transition from the Mahalanobis metric learning to the DML framework. Note that $g$ cannot model the nonlinear relations in the data since it only multiplies the input with $\vec{L}$. We extend the linear mapping $g$ to the nonlinear multi layer neural network by construction a function $f$ due to the superior modeling power of the neural networks~\cite{lecun2015deep}. We denote the number of layers in the neural network as $u$, and the number of neurons in each layer as $v^{(c)}$, where $c \in \{1,2,...u\}$. We also denote the weight as $\vec{W}^{(c)} \in \mathbb{R}^{v^{(c)} \times v^{(c-1)}}$, and the bias as $\vec{b}^{(c)} \in \mathbb{R}^{v^{(c)}}$ for each layer. Let $\vec{h}^{(c)} = \mathrm{tanh}(\vec{W}^{(c)} \vec{h}^{(c-1)} + \vec{b}^{(c)})$ for the layer $c$ and $\vec{h}^{(0)} = \x$. Then, we construct the neural network $f: \mathbb{R}^d \rightarrow \mathbb{R}^w$ through its $u^{\mathrm{th}}$ layer as:
\begin{align*}
    f(\x) &= \vec{h}^{(u)} \\
    &= \mathrm{tanh}(\vec{W}^{(u)}\vec{h}^{(u-1)} + \vec{b}^{(u)}),
\end{align*}
where $\x \in \mathbb{R}^d$ is the input vector, $u$ is the final layer and $\vec{h}^{(u)} \in \real^w$ is the final output.

As shown in the Fig.~\ref{fig:main_figure}, the network $f$ transforms the $d$ dimensional input vector $\x$ into a $w$ dimensional latent vector $f(\x)$. Then, the distance measure between $\x_i$ and $\x_j$ becomes the euclidean distance between the latent representations $\D_f (\x_i, \x_j)$, which can also be seen as the generalized Euclidean distance as:
\begin{align*}
    \D_f (\x_i, \x_j) &= \norm{f(\x_i) - f(\x_j)}^2_2.
\end{align*}
We employ the distance $\D_f (\x_i, \x_j)$ as a dissimilarity measure. We point out that other distance measures such as cosine distance can be used in our framework instead of $\D_f$. We denote the set of the positive pairs $\mathcal{I}^+ \subset \real^d \times \real^d$, which contains only the instances of the same type and the set of the negative pairs $\mathcal{I}^- \subset \real^d \times \real^d$, which contains only the instances of the different type as the following:
\begin{align*}
	\mathcal{I}^+ &= \{(\x_i, \x_j) \mid d_i = d_j \} \\
    \mathcal{I}^- &= \{(\x_j, \x_k) \mid d_j \neq d_k \}.
\end{align*}
The Contrastive loss~\cite{hadsell2006dimensionality} for the supervised training $\mathcal{L}_\mathrm{supervised}$ is given by
\begin{align*}
\mathcal{L}_\mathrm{supervised} = & \frac{1}{\card{\mathcal{I}^+} + \card{ \mathcal{I}^-}} [\sum_{(\x_i, \x_j)\in \mathcal{I}^+ }
\norm{f(\x_i)- f(\x_j)}^2_2
\\
&+\sum_{(\x_j, \x_k)\in \mathcal{I}^- }\max(0,a-\norm{f(\x_j) - f(\x_k)}^2_2)],
\end{align*}
where $a>0$ represents the margin between normal and anomalous instances. Note that the Contrastive loss~\cite{hadsell2006dimensionality} requires the true labels due to its supervised nature. In the next section, we show how the supervised deep metric learning is used in the one-class setting.








\subsection{Metric Space Learning via Instance Closeness Loss}

\label{sec:instance_closeness}

Before introducing the unsupervised framework, we first define our framework for the one-class training setting, i.e., all of the training data are known to be normal. Thus, the positive pairs $\I^+$ and the negative pairs $\I^-$ become equivalent to 
\begin{align*}
\mathcal{I}^+ &= \{(\x_i, \x_j) \mid d_i = 0 \and d_j = 0 \} \,\mathrm{(normal\, pairs)} \\
I^- &= \varnothing \mathrm{(empty)}.
\end{align*}
Since $\I^-$ is empty in one-class setting, minimizing the loss $\Lsupervised$ is equivalent to minimizing the loss $\Lsimilarity$, which is defined as:
\begin{align*}
\Lsimilarity &= \frac{1}{\MC^2}
\sum_{i=1}^{\MC-1}
\sum_{j=i+1}^{\MC}
\norm{f(\x_i) - f(\x_j)}^2_2.
\end{align*}
\begin{remark}
One can directly apply $\Lsimilarity$ to the unsupervised setting following the common practice of assuming that all the training data is normal, similar to the conventional methods~\cite{aggarwal2015outlier}. However, this practice highly degrades the performance of many anomaly detection methods~\cite{chandola2009anomaly} (see Section~\ref{sec:unsupervised_experiments}).
\end{remark}
In the following section, we introduce another loss function that we later use to derive a feasible scoring function for $\Lsimilarity$.

\subsection{Metric Space Learning via Center Closeness Loss}
\label{sec:center_closeness}
We develop the center closeness loss $\Lcenter$, which is equivalent to the $\Lsimilarity$ for the batch training setting.

\begin{lemma}
$\Lsimilarity$ is equivalent to the following statement
\begin{align*}
    \frac{1}{\MC} \sum_{i=1}^{\MC}
    \norm{f(\x_i) - \muu}_2^2,
\end{align*}
where $\muu = \frac{1}{\MC} \sum_{k=1}^{\MC} f(\x_k)$.
\label{prop:center_loss}
\end{lemma}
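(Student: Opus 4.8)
The plan is to prove the statement by a direct algebraic expansion; no inequality or limiting argument is needed, since the computation will show that the two expressions are in fact \emph{equal}, not merely equivalent up to a constant. Write $\vec{y}_i \teq f(\x_i)$ for brevity and set $N \teq \MC$, so that $\muu = \frac{1}{N}\sum_{k=1}^N \vec{y}_k$.

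First I would symmetrize the ordered double sum defining $\Lsimilarity$. Since $\norm{\vec{y}_i - \vec{y}_j}_2^2$ is symmetric in $i,j$ and vanishes when $i=j$, we have $\sum_{i=1}^{N-1}\sum_{j=i+1}^{N}\norm{\vec{y}_i-\vec{y}_j}_2^2 = \tfrac12\sum_{i=1}^N\sum_{j=1}^N\norm{\vec{y}_i-\vec{y}_j}_2^2$. Expanding $\norm{\vec{y}_i-\vec{y}_j}_2^2 = \norm{\vec{y}_i}_2^2 + \norm{\vec{y}_j}_2^2 - 2\,\vec{y}_i^T\vec{y}_j$ and summing over both indices gives $\tfrac12\bigl(2N\sum_{i}\norm{\vec{y}_i}_2^2 - 2\,\norm{\sum_i \vec{y}_i}_2^2\bigr) = N\sum_{i}\norm{\vec{y}_i}_2^2 - \norm{\sum_i \vec{y}_i}_2^2$. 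Dividing by $N^2$ then yields $\Lsimilarity = \frac{1}{N}\sum_i\norm{\vec{y}_i}_2^2 - \frac{1}{N^2}\norm{\sum_i\vec{y}_i}_2^2$.

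Next I would expand the candidate expression the same way: $\sum_{i=1}^N \norm{\vec{y}_i - \muu}_2^2 = \sum_i\norm{\vec{y}_i}_2^2 - 2\,\muu^T\sum_i\vec{y}_i + N\norm{\muu}_2^2$. Substituting $\sum_i\vec{y}_i = N\muu$ collapses the last two terms to $-N\norm{\muu}_2^2$, and $\norm{\muu}_2^2 = \frac{1}{N^2}\norm{\sum_i\vec{y}_i}_2^2$ gives $\sum_i\norm{\vec{y}_i-\muu}_2^2 = \sum_i\norm{\vec{y}_i}_2^2 - \frac{1}{N}\norm{\sum_i\vec{y}_i}_2^2$. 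Multiplying by $\frac{1}{N}$ reproduces exactly the expression obtained for $\Lsimilarity$ above, which finishes the proof.

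The only thing to be careful about is the bookkeeping in passing from the ordered sum $\sum_{i<j}$ to the full sum $\sum_{i,j}$ and keeping track of the powers of $N$; beyond that this is the standard identity that the sum of pairwise squared distances equals $N$ times the spread about the mean, so I do not anticipate a genuine obstacle. It is worth remarking that the resulting equality is exact for any fixed batch, which is precisely what makes $\Lcenter$ a faithful $\Theta(1)$-memory surrogate for the $\Theta(N)$-memory pairwise loss $\Lsimilarity$.
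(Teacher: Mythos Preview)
Your proof is correct and follows essentially the same approach as the paper: both symmetrize the ordered double sum and reduce the identity by direct algebraic expansion. The only cosmetic difference is that the paper inserts $\pm\muu$ inside the norm and kills the resulting cross term via $\sum_j (f(\x_j)-\muu)=0$, whereas you expand both sides to the common form $\tfrac{1}{N}\sum_i\norm{\vec{y}_i}_2^2 - \tfrac{1}{N^2}\norm{\sum_i\vec{y}_i}_2^2$; either route yields the exact equality you state.
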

\begin{proof}[Proof of Lemma~\ref{prop:center_loss}]
\allowdisplaybreaks
\begin{align*}
    \Lsimilarity &= \frac{1}{{\MC}^2} \sum_{i=1}^{\MC-1} \sum_{j=i+1}^{\MC} \norm{f(\x_i) - f(\x_j)}_2^2 \\
    &= \frac{1}{2{\MC}^2} \sum_{i=1}^{\MC} \sum_{j=1}^{\MC} \norm{f(\x_i) - f(\x_j)}_2^2 \\
    &= \frac{1}{2{\MC}^2} \sum_{i=1}^{\MC} \sum_{j=1}^{\MC} \norm{f(\x_i) - \muu + \muu - f(\x_j)}_2^2 \\
    &= \frac{1}{2{\MC}^2} \sum_{i=1}^{\MC} \sum_{j=1}^{\MC} \norm{f(\x_i) - \muu}_2^2 \\
    &\,\,\,\,+ \frac{1}{2{\MC}^2}\sum_{i=1}^{\MC} \sum_{j=1}^{\MC} \norm{f(\x_j) - \muu}_2^2 \\
    &\,\,\,\,- \frac{1}{{\MC}^2} \sum_{i=1}^{\MC} \sum_{j=1}^{\MC} (f(\x_i) - \muu)^T(f(\x_j) - \muu) \\
    &=  \frac{1}{\MC} \sum_{i=1}^{\MC} \norm{f(\x_i) - \muu}_2^2 \\
    &\,\,\,\,- \frac{2}{{\MC}^2} \sum_{i=1}^{\MC} (f(\x_i) - \muu)^T (\sum_{j=1}^{\MC} (f(\x_j) - \muu)). \\
\end{align*}
Since the term at the end $\sum_{j=1}^{\MC} (f(\x_j) - \muu) = \sum_{j=1}^{\MC} f(\x_j) - \sum_{k=1}^{\MC} f(\x_k) = 0$, $\Lsimilarity$ is equal to the following term
\begin{align*}
    \frac{1}{\MC} \sum_{i=1}^{\MC} \norm{f(\x_i) - \muu}_2^2, \\
\end{align*}
where $\muu = \frac{1}{\MC} \sum_{k=1}^{\MC} f(\x_k)$. This concludes the proof of Lemma~\ref{prop:center_loss}.
\end{proof}
\begin{remark}
Lemma~\ref{prop:center_loss} shows that minimizing the distance of instances to the mean of $f$ for the batch is equivalent to the minimizing the similarity between the instances for the batch case. This equivalence explains that we can obtain a compact distribution of the data in the metric space for the normal samples by minimizing the similarity between instances, i.e., by minimizing $\Lsimilarity$.
\end{remark}

To further analyze the result of the Lemma~\ref{prop:center_loss} we define the center closeness loss $\Lcenter$ as:
\begin{align*}
    \Lcenter &=  \frac{1}{\MC} \sum_{i=1}^{\MC} \norm{f(\x_i) - \muu}_2^2 ,
\end{align*}
where $\muu = \frac{1}{\MC} \sum_{k=1}^{\MC} f(\x_k)$. In the next section, we derive scoring functions using the $\Lsimilarity$ and the $\Lcenter$.

\subsection{Anomaly Scoring Functions}

\label{sec:scoring}

Here, we introduce two different scoring functions $\Sdis$ and $\Scenter$ to score the outlierness of the instances. Recall that we define the decision function $\hat{d}_t$ in~\eqref{eq:dhat} via thresholding the scoring function where the scores greater than $\tau$ are labeled as anomalies by our framework. Thus, our goal is to obtain larger scores for the anomalies and smaller scores for the normal instances. $\Lsimilarity$ minimizes the following expectation for the normal instances:
\begin{align}
\E_{\x_t \in \mathcal{M}} \left [\frac{1}{\card{\mathcal{M}}} \sum_{\x_j \in \mathcal{M}} \norm{f(\x_t) - f(\x_j)}^2_2 \mid d_t = 0 \right].
\label{eq:dis_scoring_expectation}
\end{align}
Accordingly, we define the euclidean distance (dissimilarity) for all instances as the scoring function as:
\begin{align*}
    \Sdis(\x_t) = \frac{1}{\card{\mathcal{M}}} \sum_{\x_j \in \mathcal{M}} \norm{f(\x_t) - f(\x_j)}^2_2.
\end{align*}
However, this scoring function requires to calculate the distance to all instances, i.e., $\Theta(\card{\mathcal{M}})$ distance calculations for scoring an instance. The number of training instances are usually too large for most applications such as intrusion detection~\cite{chandola2009anomaly} and the time complexity of the $\Sdis$ grows with the training data size. Thus, employing $\Sdis$ as the scoring function is time-consuming and usually impractical. Since we show that $\Lsimilarity$ is equivalent to the $\Lcenter$ via Lemma~\ref{prop:center_loss}, hence both $\Lsimilarity$ and $\Lcenter$ minimize the following expectation for the normal instances:
\begin{align}
\E_{\x_t \in \M} \left [\norm{f(\x_t) - \muu}_2^2 \mid d_t = 0 \right],
\label{eq:center_scoring_expectation}
\end{align}
where $\muu = \frac{1}{\MC} \sum_{k=1}^{\MC} f(\x_k)$. Similar to the $\Sdis$, we define $\Scenter$ as the euclidean distance to the center of the outputs of the model $f$ as:
\begin{align*}
    \Scenter (\x_t) = \norm{f(\x_t) - \muu}^2_2,
\end{align*}
where $\muu = \frac{1}{\MC} \sum_{k=1}^{\MC} f(\x_k)$ and $f$ is the trained model. As shown in Fig.~\ref{fig:main_figure}, anomalous instances appear farther to the $\muu$ than the normal instances. Algorithm~\ref{algo:dis_center_scoring} summarizes the procedure for the $\Sdis$ as the \textit{dissimilarity} method, and the $\Scenter$ as the \textit{center} method.

\begin{algorithm}[ht!]
	\SetAlgoLined
	\small
	\KwIn{Model $f_\mathrm{best}$, target instance $\vec{x}_t$, training data $\mathcal{M}$ (after validation split), $method$ (name of the scoring function, i.e., $center$ or $dissimilarity$) }
	
	\KwOut{Score $s$ of the target instance $\vec{x}$ }
	    
    \textbf{Build Retrieval Set:} (only at the first run)
    \nl $\mathcal{R} = \{ f_\mathrm{best}(\vec{x}_i) \}_{
    \vec{x}_i \in \mathcal{M}
    }$
    
    \textbf{Calculate Center:} (only at the first run)

    \nl $\muu = \frac{1}{\mid \mathcal{M} \mid} \sum_{\vec{r} \in \R} \vec{r}$

	\textbf{Calculate score:}

	\nl \uIf{$method =$ dissimilarity}{
	    \nl $\vec{t} = \fbest (\x_t)$ \\
	    \nl $s = \sum_{\vec{r} \in \R} \norm{\vec{t} - \vec{r}}_2^2$
	}
	\nl \uElseIf{$method = $ center}{
	    
	\nl $s = \norm{\fbest (\x_t) - \muu}_2^2$
	}

	\caption{Dissimilarity \& Center Based Scoring}
    \label{algo:dis_center_scoring}
\end{algorithm}

\begin{remark}
Notice that $\Scenter$ only requires to calculate the distance to the center as shown in Fig.~\ref{fig:main_figure}, thus, its time and memory complexities are $\Theta(1)$ unlike $\Sdis$ with $\Theta(\MC)$ time and memory complexity. Since both of the $\Lsimilarity$ and the $\Lcenter$ minimizes the expectation in~\eqref{eq:center_scoring_expectation}, they can be used interchangeably with the $\Scenter$.
\end{remark}
In the next section, we introduce a self-supervised data distillation method that iteratively cleans the training data to obtain more reliable training data.

\subsection{Self-Supervised Data Distillation}

\label{sec:data_distillation}

In this section, we introduce a data distillation method for the $\Lsimilarity$ and the $\Lcenter$ via self-supervision that improves our method in the unsupervised setting.

\begin{definition}{SmallestK($A$,$K$)}
returns the multiset of $K$ elements with the lowest values in $A$.
\end{definition}

Anomaly detection methods often assume nearly all the training data as normal and provide good performance when the assumption holds and their performance significantly deteriorates if this assumption in wrong~\cite{chandola2009anomaly}. 
To remedy this problem, we introduce a self-supervision based approach to distill the data between epochs while training. In this way, we reduce the anomaly rate of the training data throughout. After each epoch and in the beginning, we score all the training instances through $\Scenter$. Using these scores, we distill the data by selecting $\rho^n$ of the instances that are more likely to be normal than the rest of the training data according to our model. At the beginning of each epoch, we construct the set of distilled training data $\P$ as:
\begin{align*}
    \P &= \{ \x_i \mid \norm{ f(\x_i) - \muu}_2 \leq \tau^n \}_{\x_i \in \M},
\end{align*}
where
\begin{align*}
    \muu &= \frac{1}{\MC} \sum_{k=1}^{\MC} f(\x_k) \\
    \F &= \{ \norm{f(\x_i) - \muu }_2 \}_{\x_i \in \M} \\
    \tau^n &= \max (\mathrm{SmallestK}(\F, K=\rho^n \MC ))
\end{align*}
and $\rho^n \in (0,1]$ is a hyperparameter that defines the ratio of the instances in $\M$, which we use to train our model for each epoch. As shown in Fig.~\ref{fig:main_figure}, we select $\tau^n$ to enclose $\rho^n$ of all instances in the training data. Then, we use those instances to build $\P$. Through training on $\P$ on a particular epoch, we build an enclosure region over the training data so that we consider the instances inside the region as normal during the epoch.



\begin{remark}
Recall that $\Scenter$ scores all the training data in a short time since its time complexity is $\Theta(1)$ Accordingly, the data distillation with $\rho^n < 1$ further reduces the running time at each epoch since it decreases the number of elements in the loss.
\end{remark}
\begin{remark}
Defining the hyperparameter $\rho^n$ in terms of the ratio improves interpretability instead of directly defining $\tau^n$ as a hyperparameter. It also removes the requirement to choose a different $\tau^n$ manually for each dataset.
\end{remark}
\begin{remark}
We initialize the model $f$ via uniform Glorot initialization~\cite{glorot2010understanding}. Although the model is not trained in the beginning, it still can extract features from the data and projects the data similarly to the random projection. Random projection is applied before applying the anomaly detection methods and shown to be effective in anomaly detection tasks~\cite{aggarwal2015outlier}. Our scoring functions are capable of scoring anomalies even with the randomly projected data. Thus, we apply the data distillation before the first epoch, too.

\end{remark}
In the next section, we describe the construction of the mini-batches for the training.







\subsection{Mini-batch Construction}
\label{sec:mini_batch_construction}
We train our model through mini-batches of instances. We randomly select a small subset of size $b$ without replacement from the training data $\M$. We repeat this process for iterations until we select all instances in $\M$ for each epoch. For the construction of the mini-batches, we use the distilled training data $\P$ after shuffling to facilitate the random sampling. Given that $b$ is the mini-batch size and $\muu = \frac{1}{\MC} \sum_{k=1}^{\MC} f(\x_k)$ is calculated at the beginning of each epoch, we construct the $m^\mathrm{th}$ mini-batch $\B^{(m)}$ of size $b$, and the corresponding set of distances $\D^{(m)}$ for the instance closeness loss as:
\begin{align*}
    \B^{(m)} &= \{(\x_i, \x_j) \mid i < j < mb \}_{i=(m-1)b+1}^{mb} \\
    \D^{(m)} &= \{ \norm{f(\x_i) - f(\x_j)}_2 \}_{(\x_i,\x_j) \in \B^{(m)}.}
\end{align*}
Similarly for the center closeness loss, we define the mini-batch  $\B^{(m)}$ and the corresponding set of distances $\D^{(m)}$ as:
\begin{align*}
    \B^{(m)} &= \{ \x_i \}_{i=(m-1)b+1}^{mb} \\
    \D^{(m)} &= \{ \norm{f(\x_i) - \muu}_2 \}_{\x_i \in \B^{(m)}.}
\end{align*}
In the following section, we present the hard normal mining method.

\subsection{Online Hard Normal Mining}

\label{sec:hard_mining}
Here, we introduce the hard normal mining method for our framework that selects the specific instances for training, which are more likely to improve the model~\cite{harwood2017smart}.

The idea behind hard mining is to select the samples that are challenging so that the model does not overfit to the easiest instances in the training data~\cite{harwood2017smart}. We perform the hard mining on the mini-batches, i.e., in online manner, to avoid the calculation of the distances for all pairs in the training data. Recall that we define the normal enclosure region in Section~\ref{sec:data_distillation}. We construct mini-batches using the instances in the normal enclosure region, i.e., the distilled training data. Thus, the selected instances are less likely to be anomalies. 

\begin{definition}{LargestK($A$,$K$)}
returns the multiset of $K$ elements with the largest values in $A$.
\end{definition}

We introduce an online method for the hard mining of pairs relying on the normal enclosure region. We only incorporate the $\rho^h$ of the hardest instances in each mini-batch that are inside the normal enclosure region. For instance closeness loss, we define the hardest instances as the farthest pairs of instances in the enclosure region. For center closeness loss, we define the hardest instances as the farthest points to the center in the normal enclosure region. Through the already calculated set of distances in mini-batch $\D^{(m)}$, we select the set of hardest distances in the mini-batch $m$ as:
\begin{align*}
    \Q^{(m)} = \{r \mid r \geq \tau^h \}_{r \in \D^{(m)}},
\end{align*}
where
\begin{align*}
    \tau^h = \min(\mathrm{LargestK}(\D^{(m)}, K=\rho^h \card{\D^{(m)}}))
\end{align*}
and $\rho^h \in (0, 1]$ is a hyperparameter that defines the ratio of the distances in $\D^{(m)}$, which we use to train our model for each mini-batch.

\begin{remark}
The online hard normal mining with $\rho^h < 1$ reduces the number of elements in the loss, therefore, it decreases the running time of our method.
\end{remark}

We describe the training details of our model in the next section.
\subsection{Training of the Model}
In this section, we provide the details of the training and combine introduced methods described in the previous sections.

Using $\Q^{(m)}$, we unify the calculation of both losses and combine them with the weight decay regularization into $\L^{(m)}$ as:
\begin{align*}
    \L^{(m)}= \frac{1}{\card{\Q^{(m)}}} \sum_{r \in \Q^{(m)}} r +\frac{\lambda}{2} \sum_{c=1}^u \norm{\vec{W}^{(c)}}_F^2,
\end{align*}
where $u$ is the number of layers in the network, $\vec{W}^{(c)}$ is the weight matrix of the layer $c$, and $\lambda$ is the weight decay regularization hyperparameter. After each mini-batch's forward pass, we update our model $f$. For each layer $c=\{1,...u\}$, we update the layer weight matrix $\W^{(c)}$ and bias vector $\b^{(c)}$ using $\frac{\partial \mathcal{L}}{\partial\vec{W}^{(c)}}$, and $\frac{\partial \mathcal{L}}{\partial\vec{b}^{(c)}}$, respectively. We use Adam's algorithm~\cite{kingma2014adam} for parameter optimization of the model $f$. We calculate the center $\muu$ at the beginning since it is not feasible to calculate it after each weight update. 

We also adapt early stopping and the best model selection through the validation set $\V$. We select the best model $f_\mathrm{best}$ as the model with the lowest validation loss. If validation loss does not improve for $e^v$ epochs, we stop the training and set $f_{\mathrm{best}}$ as the best model, i.e., the model with the lowest validation loss. Note that since our method is \textbf{unsupervised} and the losses do not require any labels, we do not use any labels from the validation set, too.

\begin{algorithm}[htbp]
	\SetAlgoLined
	\small
	\KwIn{Model $f$, training data $\mathcal{M}$ }
	
	\KwOut{Trained model $f_\mathrm{best}$ with the lowest validation error}
	\textbf{Parameters:} $n$ (number of epochs), $b$ (batch size), $\rho^n$ (ratio of data distillation), $\rho^h$ (ratio of online hard normal mining), $\rho^v$ (ratio of validation split), $e^v$ (number of epochs without validation improvement), $\lambda$ (weight decay regularization parameter, $b$ (mini-batch size)\\
	
	\nl \textbf{Initialize } $\{\W^{(c)}\}_{c=1}^u$ via Uniform Glorot~\cite{glorot2010understanding} initialization
	
	\nl \textbf{Initialize: } $v = \infty$ (lowest validation loss)

	\nl \textbf{Initialize: } $z = e^v$ (number of epochs before early stopping)
	
	\nl Separate $\rho^v$ of $\mathcal{M}$ into the validation set  $\mathcal{V}$
	
	\textbf{Training:}
	
	\nl \For{epoch $e=1$ to $n$}{
	    
	    \nl \uIf{z = 0}{
	    
	    Early stopping
	    
	    \nl \textbf{break}
	    
	    }

	    \textbf{Data Distillation:}
	    
        \nl $\muu = \frac{1}{\MC} \sum_{k=1}^{\MC} f(\x_k)$
        
        \nl $\F = \{ \norm{f (\x_i) - \muu }_2 \}_{\x_i \in \M}$

        \nl $\tau^n = \max (\mathrm{SmallestK}(\F, K=\rho^n \MC ))$
         
        \nl $\P = \{ \x_i \mid \norm{ f(\x_i) - \muu}_2 \leq \tau^n \}_{\x_i \in \M}$

		\textbf{Construct Mini-batches:}
		
        \nl Shuffle $\P$ with random sampling
        
        \nl \For{mini-batch $m=1$ to $\left \lceil \frac{\mid \mathcal{P} \mid}{b} \right \rceil$}{
            
            \nl \uIf{$loss = $ instance\_closeness}{
                \nl $\B^{(m)} = \{(\x_i, \x_j) \mid i < j < mb \}_{i=(m-1)b+1}^{mb}$
            }
            
            \nl \uElseIf{$loss = $ center}{
                \nl $\B^{(m)} = \{ \x_i \}_{i=(m-1)b+1}^{mb}$
            }
            
        }

        \nl \For{mini-batch $m=1$ to $\left \lceil \frac{\mid \mathcal{P} \mid}{b} \right \rceil$}{
            \textbf{Forward Pass:}

            \nl \uIf{$loss = $ instance\_closeness}{
                \nl $\D^{(m)} = \{ \norm{f(\x_i) - f(\x_j)}_2 \}_{(\x_i,\x_j) \in \B^{(m)}}$
            }
            
            \nl \uElseIf{$loss = $ center}{
                \nl $\D^{(m)} = \{ \norm{f(\x_i) - \muu}_2 \}_{\x_i \in \B^{(m)}}$
            }
            
            \textbf{Online Hard Normal Mining:}
            
            \nl $    \tau^h = \min(\mathrm{LargestK}(\D^{(m)}, K=\rho^h \card{\D^{(m)}}))$
            
            \nl $ \Q^{(m)} = \{r \mid r \geq \tau^h \}_{r \in \D^{(m)}}$
            
            \textbf{Calculate Loss: }
            
            \nl $\L^{(m)} = \frac{1}{\card{\Q^{(m)}}} \sum_{r \in \Q^{(m)}} r +\frac{\lambda}{2} \sum_{c=1}^u \norm{\vec{W}^{(c)}}_F^2$

            \textbf{Backward Pass:}
            
            \nl \For{layer $c=1$ to $u$}{
                \nl Assume $\muu$ is scalar, i.e., $\frac{\partial \muu}{\partial\vec{W}^{(c)}} = \frac{\partial \muu}{\partial\vec{b}^{(c)}} = 0$.
    
                \nl Update layer weights $\vec{W}^{(c)}$ via backpropagation using $\frac{\partial \mathcal{L}^{(m)}}{\partial\vec{W}^{(c)}}$
                
                \nl Update layer bias terms $\vec{b}^{(c)}$ via backpropagation using $\frac{\partial \mathcal{L}^{(m)}}{\partial\vec{b}^{(c)}}$
            }
            
            \textbf{Validation:}
            
            \nl \uIf{$loss = $ instance\_closeness}{
                \nl $\D_v = \{ \norm{f(\x_i) - f(\x_j)}_2 \}_{(\x_i,\x_j) \in \V \times \V}$
            }
            
            \nl \uElseIf{$loss = $ center}{
                \nl $\D_v = \{ \norm{f(\x_i) - \muu}_2 \}_{\x_i \in \V }$
            }
            
            \nl $\mathcal{L}_v = \sum_{r \in \mathcal{D}_v } r $
            
            \nl \uIf{$\mathcal{L}_v < v$}{
                
                \nl $v = \mathcal{L}_v$
                
                \nl $z = e^v$
                
                \nl $f_\mathrm{best} = f$
            }
            \nl \uElse{
                \nl $z = z - 1$
            }
        }
        
    }
	\caption{Complete Training Procedure}
    \label{algo:training}
\end{algorithm}

Algorithm~\ref{algo:training} summarizes the training of the whole model for both center closeness and instance closeness losses.

\begin{remark}

We avoid the model collapse, i.e., $f(\x)=0$ $\forall \x$, through validation, mini-batch-training and hard normal mining.

\end{remark}


\begin{remark}
we can readily extend our work to the CNNs~\cite{krizhevsky2012imagenet} for anomaly detection on images since they are shown to be successful in image-related tasks. We are given a training set of RGB images $\{ \vec{m}_i \in \real^{3\times s \times h} \}_{i=1}^n$, where $s$ is the width, and $h$ is the height. We first map the image to $w$ dimensional latent vector via a convolutional network followed by flattening, i.e., $c(\cdot): \real^{3\times s \times h}\rightarrow \real^d$, which may also contain a feed-forward extension at the end. Instead of the $f(\cdot)$ that processes only the vectors, we use $c(\vec{m})$ for $\vec{m} \in \real^{3\times s \times h}$.
\end{remark}

\begin{remark}
we can readily extend our framework to the RNNs~\cite{elman1990finding} for anomaly detection on time series due to their ability to store the past information. The RNN outputs $\vec{h}_t = \mathrm{tanh}(\vec{U} \x + \vec{V} \vec{h}_{t-1})$, where $t$ is the time step, $w$ is the metric space size, weight $\vec{U} \in \real^{w \times d}$, weight $\vec{V} \in \real^{w \times w}$, $\vec{h}_t \in \real^w$ and $\vec{h}_0 \in \real^w$ is the initial hidden state vector. Instead of the $f(\cdot)$, we directly use $\vec{h}_t$. Notice that the network can be extended to the multiple layers and combined with other layers such as feed-forward layer as long as it outputs $w$ dimensional latent vector.
\end{remark}

In the following section, we demonstrate our experiments and their details.



\section{Simulations}
\label{sec:experiments}

In this section, we demonstrate the anomaly detection performance of our method on various real-world datasets and compare it with the state-of-the-art methods. We also analyze the behavior of the method with respect to its parameters and perform an ablation study.

\subsection{Evaluation Methodology}
\label{sec:evaluation_methodology}
In this section, we describe the evaluation metrics, the cross validation strategy and the training settings.


To compare the performance on the anomaly detection datasets, we use the Area Under Curve of the Receiver Operating Characteristics (AUC of the ROC), which is commonly used in the anomaly detection literature~\cite{aggarwal2015outlier}. We apply the 3-fold stratified cross-validation. We split the data into 3-folds with equal anomaly rates and repeat the experiment by choosing one fold as the test set and the rest as the training set. We repeat this process for each of the three splits. To reduce the effect of randomness in our experiments, we repeat all cross-validation experiments three times with different random seeds. Hence, we report the mean score of the nine experiments for every method-dataset pair.

We report the scores for the three different settings for all experiments. For the \textit{seen} setting, we train and test on the training set. For the \textit{unseen} setting, we train on the training set and test on the test set. For the \textit{one-class} setting, we manually remove anomalies from the training set and evaluate on the test set. In the following section, we describe the methods used in our comparisons and their implementation details.

\subsection{Compared Methods and Implementation Details}
\label{sec:implementation}
We have introduced the Anomaly Detection with Deep Metric Learning (ADDML) method in Section~\ref{sec:novel_anomaly}. We compare our method with Deep Autoencoder (DAE)~\cite{aggarwal2015outlier}, Isolation Forest (IF)~\cite{liu2008isolation}, Robust Principal Component Classifier (PCC)~\cite{shyu2006principal}, One-Class Support Vector Machine (OC-SVM)~\cite{scholkopf2001estimating}, Histogram Based Outlier Scoring (HBOS)~\cite{goldstein2012histogram}. We select the hyperparameters as suggested in the particular papers~\cite{liu2008isolation,shyu2006principal,scholkopf2001estimating,goldstein2012histogram}. We use IF with $100$ trees, and $256$ samples ($214$ for \textit{Glass} dataset) for each tree. We use OC-SVM with Gaussian kernel and $\nu=0.5$. We use HBOS with $5$ bins. For all experiments, we normalize the data via subtracting the mean of the training data and dividing by the standard deviation of the training data. 

For ADDML, which is our method, we use the same hyperparameters for all experiments except the \textit{one-class} setting, where we use the normal ratio $\rho^n=1$ due to the results of the ablation study in Section~\ref{sec:ablation}. We use $\rho^n=\frac{2}{3}$ for the \textit{seen}, and the \textit{unseen} settings. We use the instance closeness loss with the center distance scoring function $\Scenter$ since the dissimilarity based scoring function $\Sdis$ is infeasible for large number of training instances. We use the hard mining ratio of $\rho^h=\frac{1}{3}$. We use metric space dimension $w=64$, which is also the latent space dimension for the DAE. For the DAE and the ADDML, we use Adam~\cite{kingma2014adam} optimizer with the default learning rate $10^{-3}$ and the weight decay $\lambda=10^{-5}$. We also use the $\rho^v=0.1$ of the training set as a validation set. We train the models for $n=50$ epochs with early stopping with $e^v=5$ patience. In the next section, we specify the datasets and their details. 

\subsection{Datasets}
Table~\ref{tab:unsupervised_scores} shows the datasets along with the number of dimensions, the number of instances and the percentage of the outliers in each dataset we use. These are the real life datasets widely used in the anomaly detection experiments in the literature~\cite{manzoor2018xstream,liu2008isolation}, which are obtained from~\cite{Rayana:2016}. In the following section, we demonstrate the effect of the components.
\subsection{Ablation Study}
\label{sec:ablation}

Here, we perform an ablation study to observe the effect of the data distillation and the hard mining components.

Table~\ref{tab:ablation} shows the AUC scores on the \textit{Letter} dataset when the components added one by one. We apply the evaluation methodology described in Section~\ref{sec:evaluation_methodology}. We first evaluate the effect of the instance closeness, and center closeness losses without the data distillation and the hard mining components by setting $\rho^n=1$ and $\rho^h=1$. Then, we add the data distillation component via setting $\rho^n=\frac{2}{3}$. Lastly, we add the hard mining component by setting $\rho^h=\frac{1}{3}$.

\begin{table}[h!]

\caption{The AUC under ROC scores for the \textit{Letter} Dataset When the Components are Added Gradually}

\begin{tabular}{@{}lccc@{}}

\toprule

Components          & Seen & Unseen & One-Class \\ \midrule

Instance Closeness Loss  &    77.42 & 77.96 & 81.37          \\
+ Data Distillation  &    80.51 & 80.17 & 80.83          \\
+ Hard Normal Mining  &   \textbf{81.49}& \textbf{81.17} & \textbf{82.33}        \\ \midrule
Center Closeness Loss  &    79.80 & 79.63 & 78.87          \\
+ Data Distillation  &    77.14 & 77.80 & 78.53          \\ 
+ Hard Normal Mining  &    77.20 & 78.22 & 79.55          \\ \bottomrule 

\end{tabular}
\label{tab:ablation}
\end{table}

When we include the data distillation to the instance closeness loss, the AUC increases by ~3\% (absolute) in the seen setting and ~2\% (absolute) in the unseen setting. The AUC also slightly increases in the seen and unseen settings of the center closeness loss when we add the data distillation component. For both losses in the one-class setting, the performance decreases as expected when we add the data distillation component. Recall that the motivation behind the data distillation is to remove the anomalies from the training data so that the model would not overfit to those samples. However, there are no anomalies in the one-class setting. Thus, the performance reduces as the number of training instances at each epoch. The AUC slightly increases when we include the hard mining component for both losses and in all settings. The instance closeness loss is more accurate compared to the center closeness loss without the data distillation and the hard normal mining. Moreover, the instance closeness loss benefits significantly more by the data distillation and the hard normal mining components compared to the center closeness loss. Thus, in further experiments, we only analyze the instance closeness loss for fairness. 

\subsection{Unsupervised and One-Class Experiments}
\label{sec:unsupervised_experiments}
In this section, we compare our model with the state-of-the-art unsupervised anomaly detection methods from different paradigms. We use the evaluation and the cross-validation method described in Section~\ref{sec:evaluation_methodology}. We use the parameters described in Section~\ref{sec:implementation}.

\begin{table*}[t!]
\caption{Average AUC Scores in Percentage (\%) Over 9 Runs Per Dataset For Each Algorithm}

\centering
\begin{tabular}{@{}lrrrrcccccc@{}}
\toprule
Dataset & \#Instances & \#Dims & Outlier\% & Setting

& DAE& IF& PCC& OC-SVM& HBOS& ADDML\\ \midrule
\multirow{3}{*}{Gisette}&            \multirow{3}{*}{3850} & \multirow{3}{*}{4971} & \multirow{3}{*}{9.09} & Seen& 74.37& 49.24& 74.23& \textbf{75.02}& 35.60& 73.40\\ 
& & & & Unseen& 73.76& 49.14& 73.63& 74.24& 35.64& \textbf{79.21}\\ 
& & & & One-Class& 81.94& 48.96& 81.86& 82.06& 37.48& \textbf{84.22}\\ \midrule 
\multirow{3}{*}{Glass}&            \multirow{3}{*}{214} & \multirow{3}{*}{9} & \multirow{3}{*}{4.21} & Seen& 60.56& 71.72& 45.35& 58.81& 70.61& \textbf{73.57}\\ 
& & & & Unseen& 56.91& 70.28& 44.20& 53.13& 67.12& \textbf{74.78}\\ 
& & & & One-Class& 59.13& \textbf{74.89}& 47.14& 62.57& 71.68& 71.73\\ \midrule 
\multirow{3}{*}{Ionosphere}&            \multirow{3}{*}{351} & \multirow{3}{*}{33} & \multirow{3}{*}{35.90} & Seen& 84.99& 85.65& 79.24& 85.14& 65.46& \textbf{92.64}\\ 
& & & & Unseen& 85.45& 85.44& 79.76& 84.29& 65.77& \textbf{93.29}\\ 
& & & & One-Class& 90.96& 91.01& 89.55& \textbf{95.93}& 75.00& 95.81\\ \midrule 
\multirow{3}{*}{Isolet}&            \multirow{3}{*}{4886} & \multirow{3}{*}{617} & \multirow{3}{*}{7.96} & Seen& 51.23& 54.61& 51.31& 55.59& 54.07& \textbf{59.33}\\ 
& & & & Unseen& 51.30& 54.86& 51.36& 55.66& 54.04& \textbf{59.50}\\ 
& & & & One-Class& 52.10& 55.55& 52.11& 56.94& 54.61& \textbf{61.16}\\ \midrule 
\multirow{3}{*}{Letter}&            \multirow{3}{*}{1600} & \multirow{3}{*}{32} & \multirow{3}{*}{6.25} & Seen& 50.46& 63.54& 51.77& 59.81& 57.06& \textbf{81.49}\\ 
& & & & Unseen& 50.80& 62.77& 52.08& 60.00& 55.93& \textbf{81.17}\\ 
& & & & One-Class& 51.66& 62.21& 52.66& 61.56& 60.08& \textbf{81.50}\\ \midrule 
\multirow{3}{*}{Madelon}&            \multirow{3}{*}{1430} & \multirow{3}{*}{500} & \multirow{3}{*}{9.09} & Seen& 50.47& 51.93& 50.50& 50.48& \textbf{53.28}& 51.57\\ 
& & & & Unseen& 50.60& 51.14& 50.65& 50.71& \textbf{53.00}& 52.46\\ 
& & & & One-Class& 51.01& 51.99& 51.03& 51.10& 52.85& \textbf{53.77}\\ \midrule 
\multirow{3}{*}{Magic-Telescope}&            \multirow{3}{*}{13283} & \multirow{3}{*}{10} & \multirow{3}{*}{7.16} & Seen& 69.57& \textbf{77.85}& 68.84& 72.56& 73.97& 74.65\\ 
& & & & Unseen& 69.58& \textbf{77.85}& 68.88& 72.55& 73.83& 74.58\\ 
& & & & One-Class& 70.36& 77.53& 74.83& 73.93& 73.26& \textbf{80.80}\\ \midrule 
\multirow{3}{*}{Mnist}&            \multirow{3}{*}{7603} & \multirow{3}{*}{100} & \multirow{3}{*}{9.21} & Seen& 85.85& 79.78& 84.88& 85.00& 65.68& \textbf{85.89}\\ 
& & & & Unseen& 85.80& 79.81& 84.81& 84.98& 65.64& \textbf{86.43}\\ 
& & & & One-Class& 90.86& 86.36& 90.35& 90.80& 70.33& \textbf{91.25}\\ \midrule 
\multirow{3}{*}{Musk}&            \multirow{3}{*}{3062} & \multirow{3}{*}{166} & \multirow{3}{*}{3.17} & Seen& \textbf{100.00}& 99.92& 99.99& \textbf{100.00}& \textbf{100.00}& \textbf{100.00}\\ 
& & & & Unseen& \textbf{100.00}& 99.91& 99.98& \textbf{100.00}& \textbf{100.00}& \textbf{100.00}\\ 
& & & & One-Class& \textbf{100.00}& 95.88& \textbf{100.00}& \textbf{100.00}& \textbf{100.00}& \textbf{100.00}\\ \midrule 
\multirow{3}{*}{Pendigits}&            \multirow{3}{*}{6870} & \multirow{3}{*}{16} & \multirow{3}{*}{2.27} & Seen& 93.69& \textbf{94.98}& 92.49& 93.16& 92.51& 86.91\\ 
& & & & Unseen& 93.68& \textbf{94.80}& 92.48& 93.13& 92.52& 86.16\\ 
& & & & One-Class& 94.24& 96.71& 94.12& 96.46& 93.59& \textbf{97.93}\\ \midrule 
\multirow{3}{*}{Satellite}&            \multirow{3}{*}{6435} & \multirow{3}{*}{36} & \multirow{3}{*}{31.64} & Seen& 61.87& 70.72& 62.88& 66.36& 68.68& \textbf{80.88}\\ 
& & & & Unseen& 61.84& 70.58& 62.86& 66.40& 68.34& \textbf{80.89}\\ 
& & & & One-Class& 67.73& 79.47& 69.57& 75.71& 77.85& \textbf{81.92}\\ \midrule 
\multirow{3}{*}{Satimage-2}&            \multirow{3}{*}{5803} & \multirow{3}{*}{36} & \multirow{3}{*}{1.22} & Seen& 97.77& 99.41& 97.36& 99.67& 97.74& \textbf{99.87}\\ 
& & & & Unseen& 97.79& 99.45& 97.40& 99.67& 97.81& \textbf{99.88}\\ 
& & & & One-Class& 97.93& 99.31& 97.58& 99.70& 97.25& \textbf{99.84}\\ \midrule 
\multirow{3}{*}{Shuttle}&            \multirow{3}{*}{49097} & \multirow{3}{*}{9} & \multirow{3}{*}{7.15} & Seen& 99.00& \textbf{99.66}& 99.06& 99.17& 98.19& 99.14\\ 
& & & & Unseen& 98.99& \textbf{99.67}& 99.05& 99.17& 98.42& 99.20\\ 
& & & & One-Class& 99.36& 99.61& 99.44& 99.62& 98.71& \textbf{99.96}\\ \midrule 
\multirow{3}{*}{Vowels}&            \multirow{3}{*}{1456} & \multirow{3}{*}{12} & \multirow{3}{*}{3.43} & Seen& 57.25& 76.28& 59.67& 77.84& 63.38& \textbf{85.21}\\ 
& & & & Unseen& 56.95& 75.44& 59.09& 77.76& 63.34& \textbf{84.36}\\ 
& & & & One-Class& 59.22& 76.80& 61.20& 82.30& 63.82& \textbf{90.90}\\ \midrule 
 \midrule\multirow{3}{*}{Average AUC} &            \multirow{3}{*}{-} & \multirow{3}{*}{-} & \multirow{3}{*}{-} & Seen & 74.08& 76.81& 72.68& 77.04& 71.16& \textbf{81.75}\\ & & & & Unseen& 73.82& 76.51& 72.59& 76.55& 70.81& \textbf{82.28}\\ & & & & One-Class& 76.18& 78.31& 75.82& 80.62& 73.32& \textbf{85.06}\\ 

\bottomrule
\end{tabular}
\label{tab:unsupervised_scores}
\end{table*}

Among five methods, the ADDML achieves the best performance for 9 times in the seen setting, 10 times in the unseen setting, and for 12 times in the one-class setting out of the 14 datasets. On average, the ADDML improves the AUC in absolute between 4.71\%-10.59\% for the seen setting, 5.73\%-11.47\% for the unseen setting, and 4.44\%-11.74\% for the one-class setting. Thus, our method performs significantly better than the compared well-known methods in the literature.

The ADDML achieves the highest scores in all settings for all datasets with dimensions higher than or equal to 100, i.e., \textit{Mnist}, \textit{Musk}, \textit{Gisette} (except the seen setting), \textit{Madelon} (except the seen and the unseen settings), and \textit{Isolet}. In the seen and the unseen settings, which contains outliers in the training data, the ADDML achieves the highest score for the datasets higher than 10\% outlier rate, i.e., the \textit{Ionosphere}, and the \textit{Satellite}. These results demonstrate that our method achieves outstanding results compared to the well-known methods in the literature when the number of dimensions is high, or when the outlier rate is high.

Similar to the other methods, the ADDML achieves a higher or equal score in the one-class setting compared to its seen and \textit{Unseen} settings for 12 out of the 14 datasets. Furthermore, when the one-class score is lower than the seen setting or the unseen setting, the score of the one-class setting is very close to the other settings. Thus, all methods better model the data as the number of anomalies decrease. In the following section, we analyze the data distillation hyperparameter.

\subsection{Analysis of the Data Distillation Hyperparameter $\rho^n$}
\label{sec:data_distillation_analysis}

Here, we analyze the normal ratio parameter $\rho^n$ for the data distillation, which defines the ratio of the enclosed instances in the normal enclosure region. Recall that only $\rho^n$ of the instances, which are likely to be normal, are used at each epoch to train our method. We use the instance closeness loss with $\rho^h=\frac{1}{3}$ as in Section~\ref{sec:unsupervised_experiments} for all experiments. We use the evaluation and the cross-validation method described in Section~\ref{sec:evaluation_methodology}.

\begin{figure}[htbp]
    \centering
    \includegraphics{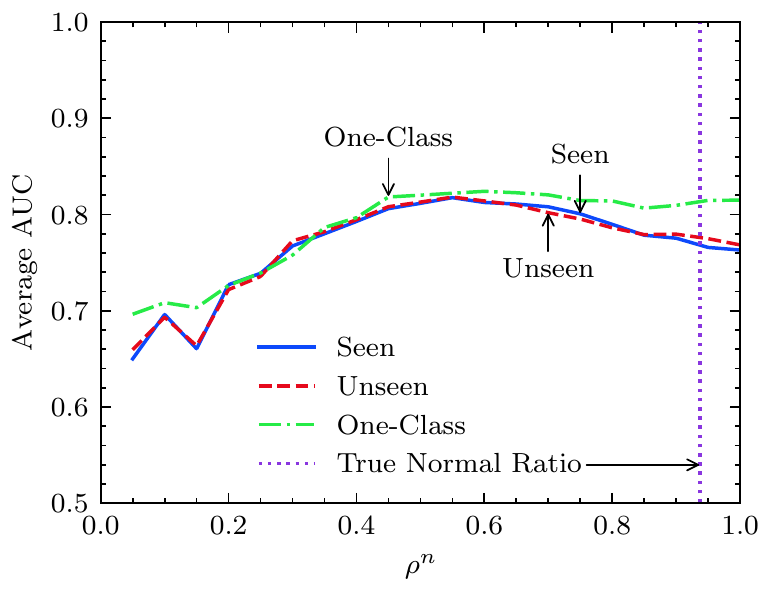}
    \caption{The AUC scores for the \textit{Letter} dataset with respect to the normal ratio $\rho^n \in [0.05,1.00]$ with spacing $0.05$ for $\rho^n$ on the x-axis.}
    \label{fig:rho_n_analysis}
\end{figure}

Fig.~\ref{fig:rho_n_analysis} illustrates the AUC for the \textit{Letter} dataset with respect to the different $\rho^n$ values. In the one-class setting, we do not achieve a significant improvement by changing $\rho^n$. Hence, it is stable, i.e., at most $\approx 2\%$ absolute change in AUC, for $\rho^n>0.5$ in the one-class setting showing that our method does not require hand tuning and robust. Recall that our motivation for data distillation is to clean the training data from anomalies. Thus, the stableness with respect to the $\rho^n$ is advantageous when one does not know whether training data contains anomalies. Our method provides lesser performance for seen and unseen settings when there is no or less data distillation, i.e., $\rho^n \approx 1.0$. As $\rho^n$ decreases, the performance increases since the method use the instances that are more likely to be normal. Thus, the model becomes more robust to the outliers. The performance reaches its peak at near 0.5 instead of the true normal rate, which is 0.9375. Naturally, since our method is unsupervided, it distills the data based on self-supervision instead of the true labels, which is expected. The scores of the ADDML for all $\rho^n$ values in Fig.~\ref{fig:rho_n_analysis} are higher than the scores of the other methods defined in Section~\ref{sec:unsupervised_experiments} for the \textit{Letter} dataset. 

\subsection{Analysis of the Hard Normal Mining Hyperparameter $\rho^h$}

In this section, we analyze the hard normal mining hyperparameter $\rho^h$, which defines the ratio of the largest distances to be included in the loss for the weight updates. As in Section~\ref{sec:unsupervised_experiments}, we use the instance closeness loss with $\rho^n=\frac{2}{3}$ for the seen and the unseen settings and $\rho^n=1$ for the one-class setting for all experiments. We use the evaluation and the cross-validation method described in Section~\ref{sec:evaluation_methodology}.

\begin{figure}[htbp]
    \centering
    \includegraphics{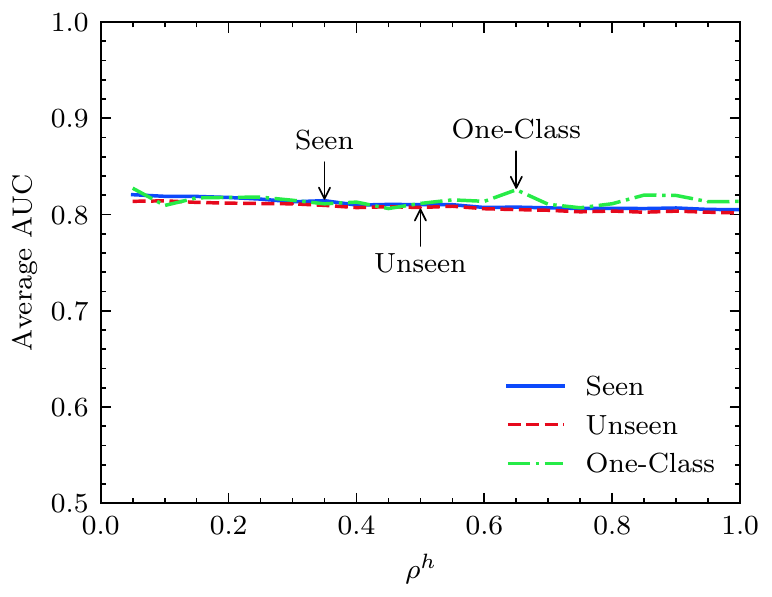}
    \caption{The AUC scores for the \textit{Letter} dataset with respect to hard mining ratio $\rho^h \in [0.05,1.00]$ with spacing $0.05$ for $\rho^h$ on the x-axis.}
    \label{fig:rho_h_analysis}
\end{figure}

Fig.~\ref{fig:rho_h_analysis} illustrates the AUC for the \textit{Letter} dataset with respect to the different $\rho^h$ values. The method seems robust with respect to the change in $\rho^h$ hyperparameter for all settings. The method achieves the highest scores for $\rho^h < 0.35$ with slight differences. Reducing $\rho^h$ increases the AUC slightly for the seen and the unseen settings. For the one-class setting, the method fluctuates for some values of $\rho^h$, although not significant. Thus, using $\rho^h < 0.35$ seems advantageous due to the running time gain for all settings and a slight performance gain for the seen and the unseen settings. The scores of the ADDML for all $\rho^h$ values in Fig.~\ref{fig:rho_h_analysis} are higher than the scores of the other methods defined in Section~\ref{sec:unsupervised_experiments} for the \textit{Letter} dataset.
\section{Conclusion}
\label{sec:conclusion}

We have introduced a novel anomaly detection method using the DML. Our method projects the data into a possibly lower-dimensional latent space by minimizing the similarity loss, which aims to bring similar instances closer in the metric space, where the model is optimized "end-to-end". For the test stage, we derive a scoring function with $\Theta(1)$ time and memory complexity, i.e., the distance to the center, to score the outlierness of the instances. Our approach is generic so that it can be adapted to any type of data such as images or time series by only changing the underlying neural network architecture, for which we provide such adaptations as remarks. Moreover, our method is generic so that one can readily extend our approach using different ideas in the DML literature. Our method has demonstrated significant performance improvements compared to the state-of-the-art methods in various real-world datasets.


\ifCLASSOPTIONcaptionsoff
  \newpage
\fi

\bibliography{main.bib}

\begin{thebibliography}{10}
\providecommand{\url}[1]{#1}
\csname url@samestyle\endcsname
\providecommand{\newblock}{\relax}
\providecommand{\bibinfo}[2]{#2}
\providecommand{\BIBentrySTDinterwordspacing}{\spaceskip=0pt\relax}
\providecommand{\BIBentryALTinterwordstretchfactor}{4}
\providecommand{\BIBentryALTinterwordspacing}{\spaceskip=\fontdimen2\font plus
\BIBentryALTinterwordstretchfactor\fontdimen3\font minus
  \fontdimen4\font\relax}
\providecommand{\BIBforeignlanguage}[2]{{%
\expandafter\ifx\csname l@#1\endcsname\relax
\typeout{** WARNING: IEEEtran.bst: No hyphenation pattern has been}%
\typeout{** loaded for the language `#1'. Using the pattern for}%
\typeout{** the default language instead.}%
\else
\language=\csname l@#1\endcsname
\fi
#2}}
\providecommand{\BIBdecl}{\relax}
\BIBdecl

\bibitem{chandola2009anomaly}
V.~Chandola, A.~Banerjee, and V.~Kumar, ``Anomaly detection: A survey,''
  \emph{ACM Computing Surveys (CSUR)}, vol.~41, no.~3, pp. 1--58, 2009.

\bibitem{wu2019deep}
P.~Wu, J.~Liu, and F.~Shen, ``A deep one-class neural network for anomalous
  event detection in complex scenes,'' \emph{IEEE {T}ransactions on {N}eural
  {N}etworks and {L}earning {S}ystems}, 2019.

\bibitem{cao2014adaptive}
Y.~Cao, Y.~Li, S.~Coleman, A.~Belatreche, and T.~M. McGinnity, ``Adaptive
  hidden markov model with anomaly states for price manipulation detection,''
  \emph{IEEE {T}ransactions on {N}eural {N}etworks and {L}earning {S}ystems},
  vol.~26, no.~2, pp. 318--330, 2014.

\bibitem{ahmed2016survey}
M.~Ahmed, A.~N. Mahmood, and J.~Hu, ``A survey of network anomaly detection
  techniques,'' \emph{Journal of Network and Computer Applications}, vol.~60,
  pp. 19--31, 2016.

\bibitem{ozkan2015imputation}
H.~{Ozkan}, O.~S. {Pelvan}, and S.~S. {Kozat}, ``Data imputation through the
  identification of local anomalies,'' \emph{IEEE {T}ransactions on {N}eural
  {N}etworks and {L}earning {S}ystems}, vol.~26, no.~10, pp. 2381--2395, 2015.

\bibitem{aggarwal2015outlier}
C.~C. Aggarwal, ``Outlier analysis,'' in \emph{Data Mining}.\hskip 1em plus
  0.5em minus 0.4em\relax Springer, 2015, pp. 237--263.

\bibitem{harwood2017smart}
B.~Harwood, B.~Kumar, G.~Carneiro, I.~Reid, T.~Drummond \emph{et~al.}, ``Smart
  mining for deep metric learning,'' in \emph{Proceedings of the IEEE
  International Conference on Computer Vision}, 2017, pp. 2821--2829.

\bibitem{krizhevsky2012imagenet}
A.~Krizhevsky, I.~Sutskever, and G.~E. Hinton, ``Imagenet classification with
  deep convolutional neural networks,'' in \emph{Advances in Neural Information
  Processing Systems}, 2012, pp. 1097--1105.

\bibitem{elman1990finding}
J.~L. Elman, ``Finding structure in time,'' \emph{Cognitive Science}, vol.~14,
  no.~2, pp. 179--211, 1990.

\bibitem{lecun2015deep}
Y.~LeCun, Y.~Bengio, and G.~Hinton, ``Deep learning,'' \emph{Nature}, vol. 521,
  no. 7553, pp. 436--444, 2015.

\bibitem{bertinetto2016fully}
L.~Bertinetto, J.~Valmadre, J.~F. Henriques, A.~Vedaldi, and P.~H. Torr,
  ``Fully-convolutional siamese networks for object tracking,'' in
  \emph{European Conference on Computer Vision}.\hskip 1em plus 0.5em minus
  0.4em\relax Springer, 2016, pp. 850--865.

\bibitem{hadsell2006dimensionality}
R.~Hadsell, S.~Chopra, and Y.~LeCun, ``Dimensionality reduction by learning an
  invariant mapping,'' in \emph{2006 IEEE Computer Society Conference on
  Computer Vision and Pattern Recognition (CVPR'06)}, vol.~2.\hskip 1em plus
  0.5em minus 0.4em\relax IEEE, 2006, pp. 1735--1742.

\bibitem{biometrictnnls}
X.~{Yang}, P.~{Zhou}, and M.~{Wang}, ``Person reidentification via structural
  deep metric learning,'' \emph{IEEE {T}ransactions on {N}eural {N}etworks and
  {L}earning {S}ystems}, vol.~30, no.~10, pp. 2987--2998, 2019.

\bibitem{imageclassificationtnnls}
C.~{Li}, C.~{Liu}, L.~{Duan}, P.~{Gao}, and K.~{Zheng}, ``Reconstruction
  regularized deep metric learning for multi-label image classification,''
  \emph{IEEE {T}ransactions on {N}eural {N}etworks and {L}earning {S}ystems},
  pp. 1--10, 2019.

\bibitem{faraki2017large}
M.~Faraki, M.~T. Harandi, and F.~Porikli, ``Large-scale metric learning: A
  voyage from shallow to deep,'' \emph{IEEE {T}ransactions on {N}eural
  {N}etworks and {L}earning {S}ystems}, vol.~29, no.~9, pp. 4339--4346, 2017.

\bibitem{scholkopf2001estimating}
B.~Sch{\"o}lkopf, J.~C. Platt, J.~Shawe-Taylor, A.~J. Smola, and R.~C.
  Williamson, ``Estimating the support of a high-dimensional distribution,''
  \emph{Neural Computation}, vol.~13, no.~7, pp. 1443--1471, 2001.

\bibitem{levelset}
X.~{Ding}, Y.~{Li}, A.~{Belatreche}, and L.~P. {Maguire}, ``Novelty detection
  using level set methods,'' \emph{IEEE {T}ransactions on {N}eural {N}etworks
  and {L}earning {S}ystems}, vol.~26, no.~3, pp. 576--588, 2015.

\bibitem{livi2015entropic}
L.~Livi, A.~Sadeghian, and W.~Pedrycz, ``Entropic one-class classifiers,''
  \emph{IEEE {T}ransactions on {N}eural {N}etworks and {L}earning {S}ystems},
  vol.~26, no.~12, pp. 3187--3200, 2015.

\bibitem{moshtaghi2011clustering}
M.~Moshtaghi, T.~C. Havens, J.~C. Bezdek, L.~Park, C.~Leckie, S.~Rajasegarar,
  J.~M. Keller, and M.~Palaniswami, ``Clustering ellipses for anomaly
  detection,'' \emph{Pattern Recognition}, vol.~44, no.~1, pp. 55--69, 2011.

\bibitem{ramaswamy2000efficient}
S.~Ramaswamy, R.~Rastogi, and K.~Shim, ``Efficient algorithms for mining
  outliers from large data sets,'' in \emph{Proceedings of the 2000 ACM SIGMOD
  International Conference on Management of Data}, 2000, pp. 427--438.

\bibitem{erfani2016high}
S.~M. Erfani, S.~Rajasegarar, S.~Karunasekera, and C.~Leckie,
  ``High-dimensional and large-scale anomaly detection using a linear one-class
  svm with deep learning,'' \emph{Pattern Recognition}, vol.~58, pp. 121--134,
  2016.

\bibitem{deeooc}
M.~{Sabokrou}, M.~{Fathy}, G.~{Zhao}, and E.~{Adeli}, ``Deep end-to-end
  one-class classifier,'' \emph{IEEE {T}ransactions on {N}eural {N}etworks and
  {L}earning {S}ystems}, pp. 1--10, 2020.

\bibitem{Ergen_2019}
T.~Ergen and S.~S. Kozat, ``Unsupervised anomaly detection with lstm neural
  networks,'' \emph{IEEE {T}ransactions on {N}eural {N}etworks and {L}earning
  {S}ystems}, 2019.

\bibitem{kulis2013metric}
B.~Kulis \emph{et~al.}, ``Metric learning: A survey,'' \emph{Foundations and
  Trends{\textregistered} in Machine Learning}, vol.~5, no.~4, pp. 287--364,
  2013.

\bibitem{glorot2010understanding}
X.~Glorot and Y.~Bengio, ``Understanding the difficulty of training deep
  feedforward neural networks,'' in \emph{Proceedings of the Thirteenth
  International Conference on Artificial Intelligence and Statistics}, 2010,
  pp. 249--256.

\bibitem{kingma2014adam}
D.~P. Kingma and J.~Ba, ``Adam: A method for stochastic optimization,''
  \emph{arXiv preprint arXiv:1412.6980}, 2014.

\bibitem{liu2008isolation}
F.~T. Liu, K.~M. Ting, and Z.-H. Zhou, ``Isolation forest,'' in \emph{2008
  Eighth IEEE International Conference on Data Mining}.\hskip 1em plus 0.5em
  minus 0.4em\relax IEEE, 2008, pp. 413--422.

\bibitem{shyu2006principal}
M.-L. Shyu, S.-C. Chen, K.~Sarinnapakorn, and L.~Chang, ``Principal
  component-based anomaly detection scheme,'' in \emph{Foundations and Novel
  Approaches in Data Mining}.\hskip 1em plus 0.5em minus 0.4em\relax Springer,
  2006, pp. 311--329.

\bibitem{goldstein2012histogram}
M.~Goldstein and A.~Dengel, ``Histogram-based outlier score (hbos): A fast
  unsupervised anomaly detection algorithm,'' \emph{KI-2012: Poster and Demo
  Track}, pp. 59--63, 2012.

\bibitem{manzoor2018xstream}
E.~Manzoor, H.~Lamba, and L.~Akoglu, ``xstream: Outlier detection in
  feature-evolving data streams,'' in \emph{Proceedings of the 24th ACM SIGKDD
  International Conference on Knowledge Discovery \& Data Mining}, 2018, pp.
  1963--1972.

\bibitem{Rayana:2016}
\BIBentryALTinterwordspacing
S.~Rayana, ``\{ODDS\} library,'' 2016. [Online]. Available:
  \url{http://odds.cs.stonybrook.edu}
\BIBentrySTDinterwordspacing

\end{thebibliography}
\balance
\bibliographystyle{IEEEtran}

\end{document}